\documentclass[journal]{IEEEtran}

\usepackage[style=ieee,natbib=true]{biblatex}
\usepackage{hyperref}
\usepackage{url}

\usepackage{mystyle}

\title{Geodesics of Dynamic Graphs for Regime Change Detection}
\author{William Cappelletti, \thanks{W.\ Cappelletti, and P.\ Frossard are with LTS4, EPFL, 1015 Lausanne, Switzerland (e-mail: \{william.cappelletti, pascal.frossard\}@epfl.ch).}%
\'Etienne Voutaz, \thanks{\'E.\ Voutaz is with Cyber-Defence Campus, Armasuisse, 3604 Thun, Switzerland (email: etienne.voutaz@armasuisse.ch).}%
\thanks{The work of W.C.\ is partially supported by Armasuisse.}%
and Pascal Frossard
}

\begin{document}

\maketitle

\begin{abstract}
Traditional change point detection in dynamic networks assumes abrupt transitions between stationary states, overlooking scenarios of continuous evolution which arise in most real-world applications, such as social networks or physical systems.
We address this gap by formally defining regimes as periods of coherent dynamics in temporal graphs, which we characterize as trajectories along geodesics in a suitably defined graph space.
This original perspective allows us to define regime changes as significant drifts in dynamics, either toward new trajectories or with pace changes.
We leverage graph regression methods to measure the cumulative distance of sequences of observed graphs from the estimated geodesics between their endpoints, in the relevant graph space, which we can combine with change point detection algorithms.
We present experiments on dynamic networks, with changing trajectories and varying speeds, in which we outperform state of the art change point detection models.
Then, we analyse mobility data during the Covid-19 pandemic, and show that our assumptions on regular network evolution lead to change points that are more aligned to external events compared to the outcomes of baseline methods.
Our work is the first to model and detect changes between evolving regimes in graph space, providing a realistic and powerful tool for analyzing complex temporal graph data.
\end{abstract}


\section{Introduction}

\IEEEPARstart{D}{ynamic} graphs are natural data representations in many applications, such as social networks, communication systems, and biological processes, where they describe relationships between entities over time.
Understanding their temporal evolution is necessary to extract knowledge, and a particularly interesting problem consists in identifying temporal segments with consistent behaviors, known as \emph{regimes}, and their changes.
The temporal graph literature \citep{ranshousAnomalyDetectionDynamic2015, zhouSurveyChangePoint2025} addresses this change point detection problem, but existing methods typically assume that the underlying process changes abruptly between stationary models.
This assumption is often violated in practice, as many real-world systems exhibit continuous evolution rather than sudden shifts.

For instance, communities in social networks evolve through changing memberships, mergers, and splits, and identifying the emergence of new groups, or the stabilization of existing ones, can reveal significant shifts in social behaviors.
Communities do not appear or disappear instantaneously but evolve gradually over time.
As a more precise example, let us consider a communication network within a company, where we expect people to exchange more messages with employees in the same department than in others. In case of an internal reorganization merging two teams, people will gradually communicate more within their new peers, but if during this process a second restructuring occurs that transfers some employees to a different unit, the communication network will drift to a new trajectory.
Modeling this kind of scenarios requires models that recognize gradual evolution as a normal behavior, and identify changes between smooth underlying dynamics of the observed system.

\begin{figure}[t]
  \centering
  \begin{subfigure}[c]{\linewidth}
    \centering
    \includegraphics[height=.66\linewidth]{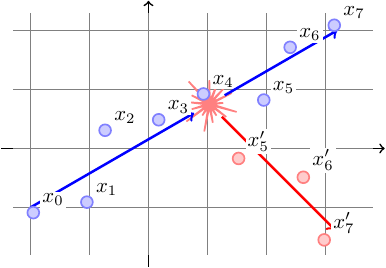}
    \caption{Euclidean space}
    \label{fig:euclidean-regimes}
  \end{subfigure}

  \begin{subfigure}[c]{\linewidth}
    \centering
    \includegraphics[height=.66\linewidth]{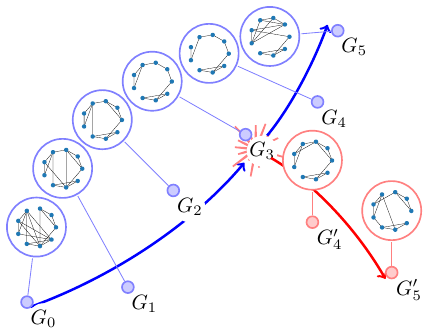}
    \caption{Graph space}
    \label{fig:graph-regimes}
  \end{subfigure}
  \caption{
    Illustration of the parallelism between \emph{coherent evolution} in Euclidean and graph spaces,
    where the underlying regimes correspond respectively to straight lines and \emph{geodesic curves}.
    {\color{blue}Blue arrows} are aligned, and they correspond to a same regime, while the {\color{red}red arrow} shows an alternative regime that arises at change points $t=4$ and $t=3$ respectively.
    Colored points show perturbed observation, in the relative spaces, arising from evolution along the regime if the corresponding color.
  }
  \label{fig:regimes-illustration}
\end{figure}

In this work, we introduce a framework to characterize sequences of graphs with continuous evolution, through regimes that are dynamic.
In contrast to the literature, we explicitly model dynamics within the graph space itself in order to directly capture the properties on which we define coherent dynamics.
We argue that coherent dynamics in graph space should follow two principles:
the evolution proceeds at a constant pace, and consistent steps of the evolution are aligned on the same trajectory.
\zcref{fig:regimes-illustration} illustrates our formalization, drawing the parallel between regular evolution in Euclidean spaces on the one hand (where trajectories with constant speed correspond to straight lines) and consistent evolution in graph spaces on the other hand.
With a proper choice of graph distance, we can formalize such trajectories as \emph{geodesics}, which describe shortest paths in graph space, and which we use to define and identify regimes and their changes.

We formulate the change point detection (CPD) problem as the task of identifying geodesics within the observed sequence and detecting when pace and direction diverge.
To solve this problem, we propose a regression-based framework in graph space.
Since geodesics can be characterized by their endpoints, we assess whether any given graph subsequence aligns with the geodesic connecting its start and end points.
We introduce a \emph{Residual Sum of Squares} (RSS) cost function that quantifies the cumulative squared distances from observed graphs to the geodesic interpolation connecting segment endpoints.
For any segment, low RSS indicates strong alignment with a single regime, while high RSS signals potential regime changes.
This approach bridges geometric analysis and classical change point detection, allowing us to apply established algorithms \cite{truongSelectiveReviewOffline2020} to identify the best segmentation into distinct regimes.
Going back to our previous example, our model would estimate an optimal trajectory from the initial state to the final in the communication network.
If observations deviate significantly from this geodesic, as when organizational restructuring alters the evolution direction, the resulting high RSS indicates that multiple regimes with different trajectories better explain the data.

We describe two practical implementations of our framework, based on different hypotheses on the graph manifold and distance.
Namely, we discuss linear and optimal-transport based graph metrics, exploring their motivation and properties.
To account for the noisy nature of real-world graph sequences, we propose four strategies to sample graphs from continuous geodesics, focusing in particular on discrete sampling.
We discuss their empirical properties and analyze the interplay between the underlying geodesics manifold and the regression distance.

Finally, we present an empirical study to compare change detection methods on continuously evolving graphs.
A first experiment on synthetic data compares performances of multiple CPD methods on evolving Stochastic Block Models (SBMs), with either changes to the trajectory endpoint, or to the evolution speed.
Our total deviation cost consistently outperforms established methods based on graph invariants and prototype dissimilarity.
A second experiment on real-world mobility data from the Covid-19 pandemic in England \citep{panagopoulosTransferGraphNeural2021} provides a qualitative analysis of the detected change points, showing that our methods identify meaningful shifts corresponding to major lockdown events.

We summarize our contributions as follows:
\begin{itemize}
  \item \textbf{Geodesic characterization of graph regimes}.
  We formalize coherent graph dynamics through geodesics in metric spaces, defining \emph{regimes} as periods with consistent temporal evolution and \emph{regime changes} as drifts in dynamics. This formalization offers new modeling perspectives for dynamic networks, beyond the stationary regime assumption prevalent in the literature;

  \item \textbf{Regime identification through regression in graph space}.
  Thanks to our geometric formulation we propose a \emph{Residual Sum of Squares} (RSS) cost that measures deviation of samples from estimated geodesics.
  This provides a theoretically grounded approach to regime detection through regression in graph space, unifying geometric interpretation with statistical inference;

  \item \textbf{Practical graph regression models}.
  We support our framework with practical implementation on three graph distances, namely Frobenius norm, $L_{1,1}$, and Bures-Wasserstein. We discuss strengths and drawbacks of different hypotheses in terms of data fidelity and computational complexity;

  \item \textbf{Analysis of discrete graph sampling from continuous models}.
  To connect our continuous geodesic model with the discrete nature of real-world graph snapshots, we introduce four sampling strategies and analyze empirically how they affect statistical properties and change detection performance;

  \item \textbf{Empirical validation across multiple regime change types}.
  We validate our framework on synthetic data with trajectory shifts and evolution speed changes, and we compare established CPD methods from the literature.
  On real-world data, we demonstrate that geometric-informed models identify structural changes that align with external events, validating the practical relevance of our new framework.
\end{itemize}

\section{Related Work}\label{sec:related}

\noindent Dynamic network analysis has progressively moved from static anomaly scoring to explicit temporal modeling of evolving relational systems. \citet{ranshousAnomalyDetectionDynamic2015} and \citet{zhouSurveyChangePoint2025} provide comprehensive overviews of change point detection on dynamic networks, where dynamic data are typically formulated either as graph snapshot series, with topology evolving over time, or as time series on graphs, where signals evolve on a network. Across these settings, common assumptions include node alignment across time, comparable sampling intervals, and piecewise stationarity or smoothness of the generating process. Under this view, change point detection aims at estimating unknown boundaries that separate distinct temporal behaviors, while regime detection aims at recovering temporally extended segments with internally coherent dynamics.

The modern offline CPD literature \citep{truongSelectiveReviewOffline2020} frames segmentation through three components: a segment cost, a search strategy, and a complexity constraint. Popular solutions combine parametric or non-parametric costs with exact or approximate search algorithms, such as dynamic programming \citep{rigaillPrunedDynamicProgramming2015}, binary segmentation \citep{scottClusterAnalysisMethod1974}, or PELT \citep{killickOptimalDetectionChangepoints2012}, which provides a canonical reference for efficient penalized optimization. Most practical pipelines for dynamic graphs \citep{zhouSurveyChangePoint2025} first map graphs to statistics, embeddings, or dissimilarity vectors and then detect abrupt drifts in that derived space.

The work that comes closest to operating directly in graph space is the study of \citet{zambonConceptDriftAnomaly2018}. They propose a method to detect changes in network stationarity by creating a dissimilarity representation \citep{duin2005dissimilarity} based on the Graph Edit Distance (GED) between samples and a set of prototype graphs. Their framework is conceptually related to ours since it builds upon a native graph distance which can capture structural properties, but authors designed it to detect sudden changes between graph distributions. In contrast, our work is the first to characterize continuously evolving regimes in the graph space and detect changes.

The only previous work to address changes in continuous evolution is from \citet{chenEuclideanMirrorsFirstorder2024}, which tries to identify ``first-order change points'' by analysing change rates between successive steps through one-dimensional representations.
Their approach relies on a \emph{random dot product} graph model, where at each instant $t$ each nodes has a latent position vector which influences edges probability proportionally to the pairwise distance to other nodes.
In this setting, change points in the graph sequence arise from changes in the \emph{latent position process} (LPP), both at the distribution level (zeroth-order) and in the distribution of the increments (first order).
To recognize change points, they define a 1D \emph{Euclidean mirror}, a parameterized line $\phi$ provided by the isomap of the estimated LPP, on which the authors study the rate of change $\norm{\psi(t+1) - \psi(t)}$.
Compared to our paradigm, the iso-mirror setting is insensitive to changes in direction.

A vast body of work tackles anomaly detection on graphs with deep learning \citep{maComprehensiveSurveyGraph2023}. These methods learn latent embeddings of graphs and identify anomalies or change points by detecting drifts in the embedding space.
Unfortunately, the maps from this black-box models deeply transform the geometry of the problem, and evolution properties observed in the latent space do not map directly to graph dynamics, so that properties such as speed of evolution and direction are lost.
For the latter, in particular, the geodesic characterization of regimes that we propose below allows to focus on methods based on interpolation and dissimilarity, which can provide counterfactuals for regime changes.

\section{Geodesic Framework for Detection of Changes in Dynamic-Graph Regimes}\label{sec:framework-definition}

\noindent In this Section we present a formal characterization of coherent graph evolution, on which we define sequences following a geodesic in graph space as \emph{consistent regimes}.
This notion easily extends to \emph{regime changes}, where the underlying trajectory drifts toward a different geodesic.
The \emph{change point detection} (CPD) problem then focuses on identifying these deviations and localizing them in time.
To conclude, we present some practical implementations of our framework based on common graph distances.

\subsection{Graph Distances, Geodesics, and Regimes}\label{ssec:graph-geodesics}

\noindent We study sequences ${\{ (G_t, \tau_t) \}_{t=1}^T}$ of weighted graph snapshots with associated timestamps $\tau_t \in \R$ and~$T$ samples.
Each snapshot $G_t = (V, W_t)$ is defined on a fixed and identifiable set of vertices $V$ of size~$N$.
Each~$G_t$ has an adjacency matrix ${W_t \in \R_+^{N \times N}}$, which indicates the weight~$w_{tnm}$ of the edge at a specific time~$t$ between nodes~$n$ and~$m$, and with ${w_{tnm} = 0}$ indicating its absence.
We focus on undirected graphs, for which~$W^\top = W$.
We suppose timestaps to be ordered and unique, so that $\tau_t < \tau_{t+1}$ for all $t > 0$, and we encompass uniformly sampled snapshots by setting $\tau_t = t \in \mathbb{N}$.

We define \emph{regimes of coherent dynamics} as sequences that satisfy two properties: first, the distance between any sample in a regime is proportional to their distance in time; and second, for each pair of samples, intermediate observations lie close to the shortest path between them.
Therefore, we equip the space $\G$ of graphs with a distance which measures dissimilarity between structures, whose choice reflects what we consider to be \enquote{small changes} between graphs.
In geometry, trajectories that agree to this definition are known as \emph{geodesics}.
Formally, given a metric space $M$ with distance $d$, a \emph{shortest geodesic} is a curve $\gamma: I \mapsto M$ that maps an interval $I \subset \R$ to points in the space such that, for a $v \in \R_+$,
\begin{equation}\label{eq:geodesic-definition}
  d(\gamma(\tau_0), \gamma(\tau_1)) = v \left| \tau_0 - \tau_1 \right|.
\end{equation}

We characterize a graph geodesic as a continuous function ${\Gamma_d: \G \times \G \times [t_i,t_j] \to \G}$ that maps a timestamp~$\tau$ in the interval $[t_i,t_j]$ to a graph $\Gamma_d (G_i, G_j, \tau)$ on the geodesic between the pair of its \emph{endpoint graphs} $(G_{t_i}, G_{t_j})$.
By definition, we have that $\forall \tau \in [t_i,t_j]$ and $G_\tau = \Gamma_d (G_i, G_j, \tau)$,
\begin{equation}\label{eq:distance-endpoints}
  d(G_{t_i}, \Gamma_{\tau}) + d(\Gamma_{\tau}, G_{t_j}) = d(G_{t_i}, G_{t_j})
  .
\end{equation}
We observe that arcs within a geodesic are geodesics themselves, which we define as \emph{aligned}.

With a proper choice of distance, there is a unique shortest geodesic between two points.
We can therefore define a partial ordering by inclusion based on geodesic endpoint timestamps and alignment, which allows to segment continuous trajectories in unique maximal geodesics.
To characterize observed sequences we focus on these maximal geodesics, so that we segment a \emph{consistent regime} as the largest interval over which geodesics are aligned and speed $v$ is constant.

Therefore, given a graph sequence with associated timestamps $\{ (G_t, \tau_t) \}_{t=1}^T$, we say that a subsequence $\{ (G_t, \tau_t) \}_{t=i}^j$ follows a \emph{consistent regime} if it describes or approximates a geodesic, i.e.~$\forall t \in [i,j], G_t \approx \Gamma(G_i, G_j, \tau_t)$ and $\exists v \geq 0$ s.th.~$\forall t, t' \in [i,j], d(G_t, G_{t'}) \approx v | \tau_t - \tau_t' |$.
Samples $t^*$  for which we observe two distinct regimes over $[t_i, t^*]$ and $[t^*, t_j]$ are \emph{change points}, and they become actually the target of unsupervised detection models.
We highlight that with this definition, changes in regimes might describe evolution towards different structures, or changes in pace of a similar dynamic, as $v$ is part of the definition of \zcref{eq:geodesic-definition}.

\subsection{Change Point Detection through Graph Residual Sum of Squares}
\label{ssec:deviation-cost}

\noindent Based on our geodesic interpretation of regimes, we propose to infer them from graph observations by quantifying how well any given subsequence $\{(G_t, \tau_t)\}_{t=i}^j$, adheres to the interpolation between its endpoints.
The idea is to compare the goodness of fit of different segmentations, to find the one that optimally separates the entire sequence into distinct, coherent regimes.

This problem is known as \emph{change point detection} (CPD), and a rich literature proposes multiple ways to approach it.
Most CPD algorithms fall within the characterization of \citet{truongSelectiveReviewOffline2020}, which consists of three blocks: a cost function that measures how well a segment is modeled by a single regime, a regularization term on the number of change points, and a search algorithm to identify the best segmentation of the network sequences.
While the cost function is highly tied to hypotheses on data, the latter two parts consist of model selection methods and are often agnostic to the problem settings.

We propose a \emph{Residual Sum of Squares} (RSS) cost, which measures the cumulative discrepancy of the observed graphs from an estimated underlying geodesic that best fits the subsequence.
The cost for a segment $[i, j]$ is the sum of squared distances from each graph $G_t$ to its projection onto the estimated geodesic~$\hat \Gamma(G_i, G_j, \tau_t)$:
\begin{equation}
  C(i, j) = \sum_{t=i}^j d(G_t, \hat\Gamma(G_i, G_j, \tau_t))^2.
\end{equation}
A low cost indicates that the graphs are well-explained by a single trajectory, while a high cost suggests that at least one regime change has occurred.
Our framework establishes a principled connection between standard statistical methods for points in Euclidean spaces and the geometric analysis of graphs.
In fact, RSS is an established statistical tool to quantify model fit in Ordinary Least Squares regression, but it is unexplored for geodesic estimation in graph space.
This RSS approach is supported by theoretical work on statistical inference properties of the Fréchet mean and variance of Hermitian operators, which comprehend Graph Laplacians, under BW distance \citep{kroshninStatisticalInferenceBures2021}, that we explore later.
In particular, empirical barycenters, and thus interpolation points, are unique and, under certain assumptions, normally distributed.

Our framework is sensitive to two fundamental types of regime changes.
First, to \emph{trajectory shifts} which occur when the system drifts toward a different target structure, corresponding to a change in geodesic endpoints.
For instance, in a social network, this captures a community gradually reorganizing toward a new configuration.
Second, to \emph{evolution speed changes} where the pace of dynamics accelerates or decelerates along a similar structural direction, as the velocity parameter $v$ in \zcref{eq:geodesic-definition} varies.
This might manifest in communication networks where interaction patterns intensify or diminish over time.
Our geodesic-based approach explicitly models these continuous evolutionary properties, enabling the identification of subtle drifts in system dynamics and not only abrupt transitions between stationary states as other traditional change point detection methods.

\subsection{Practical Distance Models}\label{ssec:practical-model}

\begin{figure*}[t]
  \begin{subfigure}[c]{.68\linewidth}
    \centering
    \includegraphics[width=\linewidth]{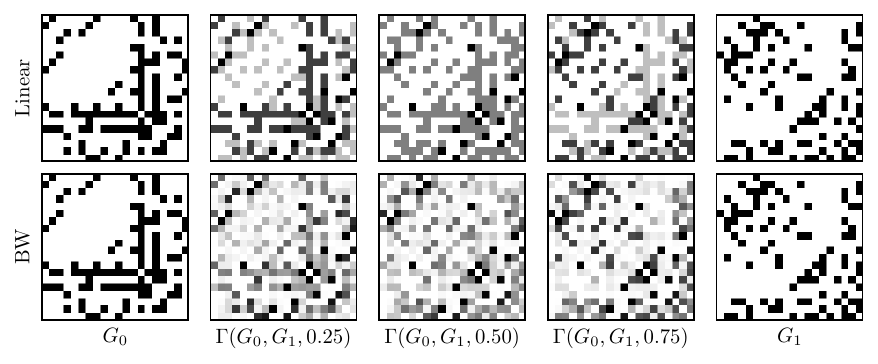}
    \caption{Adjacency matrices}
    \label{fig:geodesic-adj}
  \end{subfigure}
  \hfill
  \begin{subfigure}[c]{.31\linewidth}
    \centering
    \includegraphics[width=\linewidth]{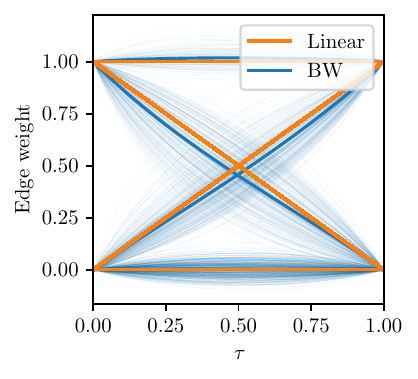}
    \caption{Weight evolution}
    \label{fig:geodesic-weight}
  \end{subfigure}
  \caption{\label{fig:geodesics-ba-sbm}
    Example of graphs on the Linear ($L_{2,2}$) and Bures-Wasserstein geodesics between two graphs $G_0, G_1$, respectively sampled from a Barabasi-Albert model, and from a SBM with two blocks.
    \zcref{fig:geodesic-adj} shows the adjacency matrices for $\tau=0,0.25,0.5,0.75,1$, while \zcref{fig:geodesic-weight} shows the evolution of each edge weight along the geodesics.
    With a linear geodesic all edges change independently and at the same speed ({\color{orange}orange lines}), while by definition of BW trajectories ({\color{blue}blue lines}) edge weights change according to their spectral relevance.
  }
\end{figure*}

\noindent Our formulation allows for flexible modeling through the choice of manifold and distance. In the following, we illustrate the applicability of this framework by focusing on three non-exhaustive distances, for which we can explicitly compute barycenters and which allow us to conveniently characterize geodesics as interpolations between \emph{endpoints}.
\begin{itemize}
  \item Frobenius, or $L_{2,2}$, distance between weight matrices,
  \begin{equation}
    d_{2}(G_i, G_j) = \norm{W_i - W_j}_{F}
    = \sqrt{ \sum_{n,m} \left( w_{inm} - w_{jnm} \right)^2 };
  \end{equation}
  \item $L_{1,1}$ distance between weight matrices,
  \begin{equation}
    d_{1} (G_i, G_j) = \norm{W_i - W_j}_{1,1}
      = \sum_{n,m} \left| w_{inm} - w_{jnm} \right|;
  \end{equation}
  \item Bures-Wasserstein (BW) distance between graph filters \citep{bhatiaBuresWassersteinDistance2019,petricmareticGOTOptimalTransport2019}, which relate the distance between graphs to the distance between associated smooth graph signal distributions.
  In this work we focus on the Laplacians' pseudoinverses~$L^\dagger$, which consider the graph Laplacian as a Gaussian's precision matrix:
  \begin{equation}
      d_{BW}(G_i, G_j)
      = \sqrt{
        \tr \left( L_i^\dagger + L_j^\dagger
        -2 \sqrt{L_{t}^{\dagger/2} L_j L_{t}^{\dagger/2}} \right)
      }.
  \end{equation}
\end{itemize}
We note that the $L_{1,1}$ and $L_{2,2}$ distances correspond to a graph edit distance, as  they measure the cumulative variation on each edge. Since nodes are fixed and identifiable the only possible edit action consists in changing edge weights.
By definition of Euclidean norms, edge differences are considered independently, and therefore these distances cannot take into account global graph structures.

The geodesic with respect to the $L$ distances is the \emph{linear} interpolation between weight matrices $W_i$ and $W_j$, so that for $\tau \in [0,1]$ we have the barycenter
\begin{equation}
  \Gamma_{Lin} (G_i, G_j, \tau) = (1 - \tau) W_i + \tau W_j.
\end{equation}
The Bures-Wasserstein geodesic, studied by \citet{haaslerBuresWassersteinMeansGraphs2024} is of particular interest, as it better preserves spectral and structural properties of the graphs.
A closed-form formula provides the interpolation between two graphs with respect to the BW distance.
The barycenter graph $\Gamma_{BW} (G_i, G_j, \tau)$ has Laplacian $L_{\tau} = S_{\tau}^\dagger$, where $\dagger$ denotes the Moore-Penrose pseudoinverse, and $S_\tau$ is given by
\begin{equation}\label{eq:barycenter-haasler}
  S_\tau = L_i^{1/2} \left(
      (1 - \tau) L_i^\dagger
      + \tau \sqrt{ L_i^{\dagger / 2} L_j^\dagger L_i^{\dagger / 2}}
    \right)^2 L_i^{1/2}.
\end{equation}
Since this formula relies on matrix square roots, it is prone to numerical issues, which might impair iterative algorithms.
Therefore, we provide the following equivalent formulation, which only requires computing the square root of the pseudoinverses product.

\begin{proposition}
  The barycenter graph $\Gamma_{BW} (G_i, G_j, \tau)$ has Laplacian $L_{\tau} = S_{\tau}^\dagger$, where $S_\tau$ is given by
  \begin{equation}\label{eq:barycenter-ours}
    S_\tau = (1 - \tau)^2 L_i^{\dagger} + \tau^2 L_j^{\dagger}
      + \left( \tau - \tau^2 \right) \left(
        \sqrt{ L_i^\dagger L_j^\dagger }
        + \sqrt{ L_j^{\dagger} L_i^\dagger }
      \right)
      .
  \end{equation}
\end{proposition}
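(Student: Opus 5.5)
Starting from \zcref{eq:barycenter-haasler}, write $A = L_i^\dagger$, $B = L_j^\dagger$, and $M = \sqrt{A^{1/2} B A^{1/2}}$, so that $S_\tau = L_i^{1/2}\big((1-\tau)A + \tau M\big)^2 L_i^{1/2}$. The plan is to expand the square and simplify each term separately. This gives
\[
S_\tau = (1-\tau)^2 L_i^{1/2} A^2 L_i^{1/2} + \tau^2 L_i^{1/2} M^2 L_i^{1/2} + (\tau-\tau^2)\, L_i^{1/2}(AM + MA) L_i^{1/2}.
\]
All matrices here are symmetric positive semidefinite with a common kernel spanned by the constant vector $\mathbf{1}$, assuming connected graphs. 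This can be enforced by working on the orthogonal complement $\mathbf{1}^\perp$, where $L_i^{1/2}$ and $A^{1/2}$ are mutual inverses and $P = L_i^{1/2}A^{1/2}$ is the projector onto $\mathbf{1}^\perp$.

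\textbf{First term.} Since $L_i^{1/2}A^2L_i^{1/2} = L_i^{1/2}A^{1/2}\,A\,A^{1/2}L_i^{1/2} = PAP = A$, the first term becomes $(1-\tau)^2 L_i^\dagger$.

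\textbf{Second term.} We have $M^2 = A^{1/2}BA^{1/2}$, so $L_i^{1/2}M^2L_i^{1/2} = PBP = B$, since $B$ also annihilates $\mathbf{1}$. This gives $\tau^2 L_j^\dagger$.

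\textbf{Cross term (main obstacle).} We need to show that $L_i^{1/2}AML_i^{1/2} = \sqrt{AB}$ for a suitable branch of the square root, and symmetrically that $L_i^{1/2}MAL_i^{1/2} = \sqrt{BA}$.

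First, $L_i^{1/2}AML_i^{1/2} = A^{1/2}ML_i^{1/2}$. Set $X = A^{1/2}ML_i^{1/2}$, where $L_i^{1/2}$ plays the role of $A^{-1/2}$ on $\mathbf{1}^\perp$. Then
\[
X^2 = A^{1/2}M\,(L_i^{1/2}A^{1/2})\,M L_i^{1/2} = A^{1/2}M^2L_i^{1/2} = A^{1/2}A^{1/2}BA^{1/2}L_i^{1/2} = AB.
\]
So $X$ is a square root of $AB$. Moreover, $X$ is similar, via $A^{1/2}$, to $M$ restricted to $\mathbf{1}^\perp$, so its spectrum is nonnegative. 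This identifies $X$ as the principal square root $\sqrt{AB}$. That square root is well defined because $AB$ is similar to the PSD matrix $A^{1/2}BA^{1/2}$, hence diagonalizable with nonnegative eigenvalues; its unique square root with nonnegative spectrum is exactly this similarity transform of $M$.

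The term $L_i^{1/2}MAL_i^{1/2} = L_i^{1/2}MA^{1/2}$ is the transpose of $X$, and $X^\top = \sqrt{(AB)^\top} = \sqrt{BA}$.

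\textbf{Care points.} Two things need attention. First, the pseudoinverse and kernel bookkeeping: each cancellation must be checked on $\mathbf{1}^\perp$, with the kernel handled separately, where everything vanishes. Second, the choice of square-root branch: $\sqrt{L_i^\dagger L_j^\dagger}$ must be interpreted as the principal root of a non-symmetric matrix with nonnegative spectrum.

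Summing the three terms yields \zcref{eq:barycenter-ours}.
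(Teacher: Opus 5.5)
Your proposal is correct and follows the same route as the paper. Like the paper, you expand the square in Haasler's formula, collapse the diagonal terms to $(1-\tau)^2L_i^\dagger$ and $\tau^2L_j^\dagger$, and identify the cross terms via $\sqrt{AB}=A^{1/2}\sqrt{A^{1/2}BA^{1/2}}\,A^{-1/2}$; the paper cites this identity from Bhatia et al., while you verify it directly by checking $X^2=AB$ and the spectrum of $X$. Two small remarks: your kernel bookkeeping with the projector $P$ onto $\mathbf{1}^\perp$ is cleaner than the paper's, which writes $L_iL_i^\dagger = Id + \frac{1}{N}\mathbf{1}\mathbf{1}^\top$ with the wrong sign; and the uniqueness of the nonnegative-spectrum square root holds because connectivity makes the zero eigenvalue of $AB$ simple, not merely because $AB$ is diagonalizable with nonnegative eigenvalues.
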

\begin{proof}
  Starting from \zcref{eq:barycenter-haasler}, we observe that
  \begin{align}
    S_\tau &= L_i^{1/2} \left(
        (1 - \tau) L_i^\dagger
        + \tau \sqrt{ L_i^{\dagger / 2} L_j^\dagger L_i^{\dagger / 2}}
      \right)^2 L_i^{1/2}
      \nonumber \\
    &= \begin{multlined}[t]
      L_i^{1/2} \left(
        (1 - \tau)^2 L_i^{2\dagger}
        + \tau^2  L_i^{\dagger / 2} L_j^\dagger L_i^{\dagger / 2}
      \right) L_i^{1/2} \\
      + L_i^{1/2} \left(
        \left( \tau - \tau^2 \right)
        L_i^\dagger \sqrt{ L_i^{\dagger / 2} L_j^\dagger L_i^{\dagger / 2}}
      \right) L_i^{1/2} \\
      + L_i^{1/2} \left(
        \left( \tau - \tau^2 \right)
        \sqrt{ L_i^{\dagger / 2} L_j^\dagger L_i^{\dagger / 2}} L_i^\dagger
      \right) L_i^{1/2}
      \end{multlined}
      \label{eq:interpolation-2} \\
    &= \begin{multlined}[t]
      (1 - \tau)^2 L_i^{\dagger}
        + \tau^2 L_j^{\dagger} \\
      + \left( \tau - \tau^2 \right) \left(
        L_i^\dagger L_i^{1/2} \sqrt{
          L_i^{\dagger / 2} L_j^\dagger L_i^{\dagger / 2}
        } L_i^{1/2}
      \right) \\
      + \left( \tau - \tau^2 \right) \left(
        L_i^{1/2} \sqrt{
          L_i^{\dagger / 2} L_j^\dagger L_i^{\dagger / 2}
        } L_i^{1/2} L_i^\dagger
      \right)
      \end{multlined}
      \label{eq:interpolation-3} \\
    &= \begin{multlined}[t]
      (1 - \tau)^2 L_i^{\dagger}
        + \tau^2 L_j^{\dagger} \\
      + \left( \tau - \tau^2 \right) \left(
        L_i^\dagger L_i \sqrt{ L_i^\dagger L_j^\dagger }
        + \sqrt{ L_j^{\dagger} L_i^\dagger } L_i L_i^\dagger
      \right)
      \end{multlined}
      \label{eq:interpolation-4}
  \end{align}
  Between \zcref{eq:interpolation-3} and~\eqref{eq:interpolation-4} we apply the identity $(AB)^{1/2} = A^{1/2} \sqrt{A^{1/2} B A^{1/2}} A^{-1/2}$ from Equations (9) and (10) by \citet{bhatiaBuresWassersteinDistance2019}.
  Finally, we obtain \zcref{eq:barycenter-ours} from~\eqref{eq:interpolation-4} by observing that $L_i L_i^\dagger = Id + \frac{1}{N} \bf{11}^\top$, with the second term being in the nullspace of both $L_i$ and $L_j$ and thus also of their product and its squareroot.
\end{proof}

The BW geodesic serves as an excellent formal model for graph evolution by preserving structural properties, but its computational cost is dominated by the multiple pseudo-inversions and matrix square roots required for interpolating graphs, which scale as $\mathcal O (N^3 T)$ with $N$ number of nodes and $T$ the number of interpolation points.
Conversely, linear interpolation with an $L_{2,2}$, or $L_{1,1}$, norm is more efficient as it only requires elementwise operations on edges of the order $\mathcal O (E T)$, where the number of edges $E$ can be much smaller than $N^2$ in sparse networks.

\zcref{fig:geodesics-ba-sbm} illustrates the Linear and BW geodesics between two graphs $G_0$ and $G_1$, the first sampled from a preferential attachment model, and the second from a Stochastic Block Model (SBM) with two blocks.
In \zcref{fig:geodesic-adj} we see that the linear interpolation appears, by definition, as a uniform fading out of edges from the first graph and a corresponding fading in of edges from the second, so that hubs from $G_0$---identified by dense row and columns---persist longer, and still have a high weight in the midpoint $\Gamma (G_0, G_1, 0.5)$.
On the other hand, the BW geodesic displays a structural evolution, with hub links disappearing faster for nodes that end in different communities.
Looking closer at single weight evolution, \zcref{fig:geodesic-weight} shows that each~$w_{\tau ij}$ aligns to one of four combinations, corresponding to edge $(i,j)$ having a weight of 0 or 1 in~$G_0$ and~$G_1$, namely changing from 0 to 0, 0 to 1, 1 to 0 and 1 to 1.
In the linear interpolation, all edges follow the average lines, while with the BW geodesic each edge describes a different curve, which depends on the change in structural role between the start and end graphs.

\section{Experiments}\label{sec:experiments}

\noindent We propose a range of experiments to compare different change point detection (CPD) models on multiple type of regime changes, and in particular we compare our proposed deviation cost to interpolation costs based on graph representation.
First, we provide precise quantitative results on syntetic data with ground truth regimes; then, we investigate mobility data in England during the Covid-19 epidemics, to qualitatively compare detected change points.

We build upon two established offline CPD algorithms, namely Binary Segmentation (BinSeg) \cite{scottClusterAnalysisMethod1974}, and Pruned Exact Linear Time (PELT) \citep{killickOptimalDetectionChangepoints2012}.
These models aim to minimize a \emph{cost function} by splitting the data into subsequences. They are regularized against false positives with a linear penalty on the number of changes.
We apply our Residual Sum of Squares from graph geodesics (\emph{graph-RSS}) with respect to both BW and linear interpolation on observed networks, and we compare it against multiple costs computed on sequence embeddings, provided by the following graph representations:
\begin{itemize}
  \item \textbf{Graph invariants} \citep{tangAttributeFusionLatent2013}: extract five graph invariants---namely \emph{size, number of triangles, scan,} and \emph{mean and maximum degree}---as representation, and combine them with a multivariate CPD algorithm.
  We extend the original method by computing \emph{global} invariants as well as local ones from six \emph{ego subgraphs}, with randomly sampled centers and either one- or two-hop neighbors.
  \item \textbf{Prototype dissimilarity} \citep{zambonConceptDriftAnomaly2018}: identify a set of graphs in the sequence to use as prototypes, and represent each sample as the vector of distances to each prototype. In the original paper, authors use the \emph{GED} ($L_{1,1}$), but we propose to integrate the \emph{BW distance}.
  \item \textbf{LAD} \citep{huangLaplacianChangePoint2020}: use singular values of the Laplacian as the graph representation and analyse two context windows to compare the graph structure with long and short term behaviors.
  \item \textbf{Iso-mirror} \citep{chenEuclideanMirrorsFirstorder2024}: compute pairwise graph distances over procrustes-aligned spectral embeddings, then compute CDMS (Classical Multidimensional Scaling) and ISOMAP \citep{tenenbaumGlobalGeometricFramework2000}.
\end{itemize}
For the baseline models, we treat the cost selection as a hyperparameter, choosing the best performing one between \emph{piecewise linear interpolation}, \emph{least squares RSS}, and \emph{autoregressive model change} \citep{baiVectorAutoregressiveModels2000}.

We perform all experiments on a 32-cores CPU computer, and we provide the implementation of our original methods, and the source code for reproducing experiments as a git repository\footnote{https://github.com/LTS4/dynamic-graph-regimes}. For baselines costs and change algorithms we use the implementations from the Ruptures library \citep{truongSelectiveReviewOffline2020}, and for prototype dissimilarity we use the implementation from \citet{zambonConceptDriftAnomaly2018}.


\subsection{Change Point Detection on Synthetic Data}

\noindent Our primary quantitative evaluation relies on synthetic data. This is a necessary choice because, to the best of our knowledge, no real-world datasets exist with ground-truth annotations of continuous evolutionary regimes. By generating data from known geodesic models, we can rigorously assess the ability of each method to recover the true underlying dynamics and change points.

\begin{table*}[t]
  \centering
  \caption{\label{tab:sbm-speed}
    Rand Index (\%) values for CPD on synthetic SBM with \emph{pace changes} on fixed trajectories.
    \textit{Our methods} are italic, \textbf{best scores} are bold, and \underline{second best} underlined.
    Higher scores represent better agreement between ground truth and detected regimes.
    We report median values from 9 experimental settings.
  }
  \sisetup{detect-weight, mode=text}
\begin{tabular}{l|SSSSSSSS}
\toprule
Geodesic          & {bw}       & {bw}       & {bw}        & {bw}      & {linear}   & {linear}   & {linear} \\
Sampling strategy & {additive} & {thr edge} & {thr fixed} & {thr iid} & {additive} & {thr edge} & {thr iid} \\
\midrule
Graph invariants             & 76.06 & \underline{74.91} & \bfseries 77.78 & 73.25 & 78.72 & 75.09 & 75.38 \\
LAD                          & 65.32 & 66.56 & 72.39 & 71.15 & 63.87 & 70.33 & 70.64 \\
Iso-mirror                   & 66.93 & 68.45 & 72.88 & 68.66 & 68.55 & 69.13 & 67.53 \\
Prototype dis.~$L_{1,1}$     & 79.66 & \bfseries 80.47 & \underline{76.76} & \bfseries 75.57 & \bfseries 80.36 & \bfseries 80.05 & \bfseries 76.22 \\
\itshape Prototype dis.~BW   & \underline{79.86} & 77.30 & 75.70 & \underline{75.35} & 79.57 & \underline{77.27} & \underline{75.48} \\
\itshape graph-RSS $L_{2,2}$ & \bfseries 86.19 & 67.97 & 76.49 & 55.56 & \underline{80.15} & 68.43 & 52.54 \\
\itshape graph-RSS BW        & 73.68 & 77.30 & 75.87 & 55.26 & 79.72 & 76.47 & 54.36 \\
\bottomrule
\end{tabular}

\end{table*}

\begin{table*}[t]
  \caption{\label{tab:sbm-synth}
    Rand Index (\%) values for CPD on synthetic SBM with \emph{trajectory changes}.
    \textit{Our methods} are italic, \textbf{best scores} are bold, and \underline{second best} underlined.
    We report median values from 9 experimental settings.
 }
  \begin{subtable}{\linewidth}
    \centering
    \caption{\label{tab:synthetic-sbm-full-rand} Full evolution to endpoint graph.}
    \sisetup{detect-weight, mode=text}
\begin{tabular}{l|SSSSSSSS}
\toprule
Geodesic          & {bw}       & {bw}       & {bw}        & {bw}      & {linear}   & {linear}   & {linear} \\
Sampling strategy & {additive} & {thr edge} & {thr fixed} & {thr iid} & {additive} & {thr edge} & {thr iid} \\
\midrule
Graph invariants             & 68.40 & 69.82 & 73.51 & 76.91 & 67.16 & 68.76 & 86.21 \\
LAD                          & \underline{77.16} & 62.59 & 60.87 & 60.64 & 73.88 & 63.23 & 59.34 \\
Iso-mirror                   & 60.84 & 66.42 & 64.06 & 62.87 & 63.17 & 62.05 & 63.07 \\
Prototype dis.~$L_{1,1}$     & 74.88 & \bfseries 76.80 & \underline{89.69} & \underline{90.88} & 72.86 & \bfseries 76.75 & \underline{92.09} \\
\itshape Prototype dis.~BW   & 76.39 & 73.88 & 89.16 & 87.05 & 78.12 & 75.43 & 88.09 \\
\itshape graph-RSS $L_{2,2}$ & 76.72 & 59.35 & 62.87 & 90.86 & \bfseries 93.51 & 59.35 & 91.85 \\
\itshape graph-RSS BW        & \bfseries 92.22 & \underline{75.20} & \bfseries 89.93 & \bfseries 92.69 & \underline{90.01} & \underline{75.70} & \bfseries 94.11 \\
\bottomrule
\end{tabular}

  \end{subtable}
  \newline

  \begin{subtable}{\linewidth}
    \centering
    \caption{\label{tab:synthetic-sbm-partial-rand} Change with possibly partial evolution.}
    \sisetup{detect-weight, mode=text}
\begin{tabular}{l|SSSSSSSS}
\toprule
Geodesic          & {bw}       & {bw}       & {bw}        & {bw}      & {linear}   & {linear}   & {linear} \\
Sampling strategy & {additive} & {thr edge} & {thr fixed} & {thr iid} & {additive} & {thr edge} & {thr iid} \\
\midrule
Graph invariants             & 67.81 & 68.76 & 73.04 & 74.58 & 74.12 & 69.32 & 76.01 \\
LAD                          & 68.33 & 66.39 & 64.82 & 63.01 & 68.48 & 64.10 & 62.30 \\
Iso-mirror                   & 63.07 & 62.09 & 64.41 & 58.66 & 64.90 & 63.20 & 63.52 \\
Prototype dis.~$L_{1,1}$     & 75.10 & \underline{77.38} & \underline{73.20} & \underline{81.99} & 75.24 & \underline{76.91} & \underline{82.27} \\
\itshape Prototype dis.~BW   & 75.03 & 72.87 & 72.25 & 79.52 & 74.97 & 74.45 & 79.21 \\
\itshape graph-RSS $L_{2,2}$ & \underline{89.71} & 60.23 & 66.71 & 63.61 & \bfseries 94.07 & 60.39 & 65.09 \\
\itshape graph-RSS BW        & \bfseries 91.93 & \bfseries 78.43 & \bfseries 73.94 & \bfseries 92.27 & \underline{90.30} & \bfseries 79.80 & \bfseries 92.53 \\
\bottomrule
\end{tabular}

  \end{subtable}
\end{table*}

\subsubsection{Sampling Along Trajectories}\label{sec:sampling}

\begin{figure}[t]
  \centering
  \includegraphics[width=\linewidth]{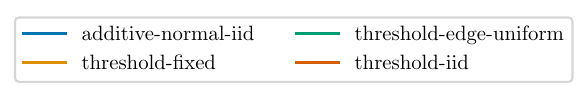}
  \begin{subfigure}[t]{\linewidth}
    \centering
    \includegraphics[height=.45\linewidth]{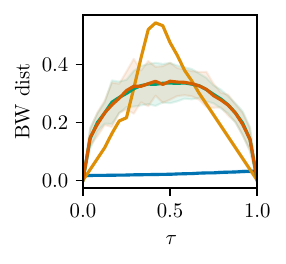}
    \hfill
    \includegraphics[height=.45\linewidth]{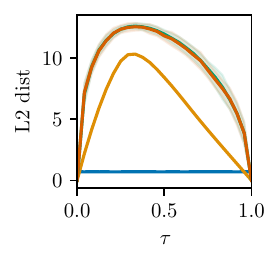}
    \caption{Distances to BW geodesic}
  \end{subfigure}
  \begin{subfigure}[t]{\linewidth}
    \centering
    \includegraphics[height=.45\linewidth]{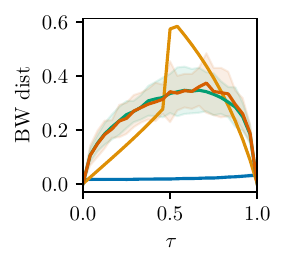}
    \hfill
    \includegraphics[height=.45\linewidth]{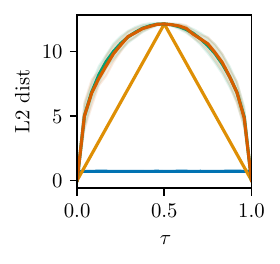}
    \caption{Distances to Linear geodesic}
  \end{subfigure}
  \caption{\label{fig:sample-geodesic-distance}
    Mean and std of distances of discrete samples from the continuous geodesic between a SBM with two blocks ($\tau = 0$) to a SBM with three blocks ($\tau=1$).
  }
\end{figure}

In real-world scenarios, we generally observe noisy realizations of an underlying process, due to measurement errors and overall uncertainty.
A peculiar problem of networks modeling complex systems arises in observing discrete interactions which might depend on continuous hidden variables.
For instance, considering a social network, we can suppose that the underlying strength of a relationship between two individuals is a continuous variable, but our measurements are often discrete with edges representing an explicit interaction, such as an exchanged message.

To bridge this gap, we analyse four methods to sample graphs along continuous geodesics, which allow to generate realistic data according to different hypotheses.
We propose the following strategies, of which the \emph{thresholding} ones produce discrete graphs:
\begin{itemize}
  \item Additive noise (\emph{additive}): $w_{tnm} + \epsilon_{tnm}$ with iid $\epsilon_{tnm} \sim \mathcal N (0, \sigma)$;
  \item Fixed threshold (\emph{thr-fix}): $w_{tnm} > \gamma$, all edges share a predefined threshold;
  \item Threshold by edge (\emph{thr-edge}): $w_{tnm} > \gamma_{nm}$ with $\gamma_{nm} \sim U(0,1)$, where each edge has an independent threshold fixed over time sampled from a uniform distribution;
  \item Independent thresholding (\emph{thr-iid}): $w_{tnm} > \gamma_{tnm}$ with $\gamma_{tnm} \sim U(0,1)$, which corresponds to independently sampling each edge from a Bernoulli distribution with probability equal to the edge weight $w_{tnm}$.
\end{itemize}
We highlight that the iid additive perturbation has an overall Gaussian behavior only within Euclidean spaces, and thus with $L$ norms, but this does not hold for the BW distance.
Using a \emph{fixed} threshold for all edges is only interesting with the BW interpolation, as edge weights change with different paces, as we see in \zcref{fig:geodesic-weight}. With a linear geodesic, then all edge weights change at the same pace, and thus they appear or disappear in blocks.
Having an \emph{edge} threshold that is constant over time provides a noisy evolution where addition and deletion are, in general, monotonic.
Finally, the \emph{independent} sampling strategy is the noisiest setting, as edge sampling is independent over time and thus, the further $w_{tnm}$ is from 0 or 1, the higher the edge variance.

\zcref{fig:sample-geodesic-distance} reports the mean and variance of distances between discrete samples and the underlying geodesic as a function of the interpolation coefficient $\tau$ between SBM samples with two and three blocks.
Across random thresholding strategies, deviations follow a roughly parabolic profile peaking near the midpoint, and as expected the variance grows with temporal separation from the endpoints.
Designing a discrete sampling scheme that yields homoscedastic distances along the trajectory remains an open problem.
For fixed thresholds, BW distances exhibit sharp jumps, reflecting abrupt structural transitions, and this effect is stronger when distances are measured to the linear geodesic.
Conversely, $L_{2,2}$ distances to the BW geodesic are skewed because constant structural change in BW induces non-linear edgewise evolution, as already highlighted in \zcref{fig:geodesic-weight}.

For $L$ distances with linear interpolation, the sampled adjacency entries are independent Bernoulli variables with parameter $w_{tnm}$, so the expected squared deviation is a sum of variances.
This yields an upper bound proportional to $\sum_{n,m} w_{tnm} (1 - w_{tnm})$, which is maximized when ${w_{tnm} = 0.5}$; hence the largest deviation occurs near the midpoint of the trajectory, where changing edges concentrate around $0.5$.
For BW geodesics, edge weights evolve at different rates, so there is no single $\tau$ that maximizes variance across edges.
Nevertheless, since BW trajectories remain within the convex hull of edge weights, the variance remains bounded by the linear case, and so does the expected deviation.
To the best of our knowledge, no closed-form variance analysis exists for such sampling methods under BW distances, so we rely on the empirical trends shown in \zcref{fig:sample-geodesic-distance}.

\begin{table*}[!t]
  \centering
  \caption{\label{tab:covid-events}
    Covid-19 events in England between March 3rd and May 12th, and detected change points on daily mobility graphs between England NUTS3 regions on the same period.
    Segmentation costs are optimized for four changes with exact search.
    We list change dates in the row corresponding to the closest real world event.
  }
  \begin{tabular}{p{40ex}l|ccccccc}
\toprule
 & & LAD & Iso-mirror & \multicolumn{2}{c}{Prototype dis.} & \multicolumn{2}{c}{\itshape Graph RSS} \\
Events                                  & Date       &        &        & $L_{1,1}$ & BW  & $L_{2,2}$ & BW  \\
\midrule
First \enquote{official} COVID-19 death & 05 Mar     &   --   &   --   & 08 Mar &   --   & 06 Mar &   --   \\
Cheltenham Festival (150k visitors)     & 10--13 Mar &   --   & 11 Mar & 11 Mar & 12 Mar & 11 Mar &   --   \\
PM advises against contact and travel   & 16 Mar     & 18 Mar &   --   &   --   &   --   & 18 Mar & 18 Mar \\
Closure of cafes, pubs and restaurants  & 20 Mar     & 20 Mar &   --   &   --   & 22 Mar &   --   & 22 Mar \\
Lockdown starts                         & 26 Mar     & 25 Mar & 25 Mar & 31 Mar &   --   &   --   &   --   \\
Maximum daily hospital admissions       & 02 Apr     & 03 Apr & 04 Apr & 04 Apr & 03 Apr & 31 Mar & 06 Apr \\
Easter day                              & 12 Apr     &   --   &   --   &   --   & 15 Apr &   --   & 10 Apr \\
--                                      &   --       &   --   & 28 Apr &   --   &   --   &   --   &   --   \\
\bottomrule
\end{tabular}

\end{table*}

\subsubsection{Detection Performances}

We study the CPD performance of multiple models on synthetic datasets with regime changes correspoding to the characterization from \zcref{ssec:graph-geodesics}.
Each dataset contains multiple sequences of 50 observations, with each timestamp having a~5\% probability of being a change point.
We separately study three kinds of regime changes, namely: change of evolution pace between fixed endpoints, trajectory change after full evolution to endpoint, and change of endpoint after partial evolution.
Each sequence has graphs with a fixed number of nodes, either 50, 100 or 200, and their endpoint graphs are independently sampled from SBMs with random block sizes and average degree set to 0.1, 0.2, or 0.3, for a total of 9 sequence types.
We compute geodesics on BW and linear models, and sample discrete graphs along the geodesics with the strategies described in \zcref{sec:sampling}.
CPD algorithms predict change points to segment sequences from each of the 9 above combinations of graph sizes and average degrees; we select hyperparameters on a training set of $10 \times 9 = 90$ sequences, and we test models on $50 \times 9 = 450$ independent sequences.

\zcref{tab:sbm-synth,tab:sbm-speed} show model performances on the three regime change types for different combinations of underlying geodesics and sampling strategies in terms of Rand Index, which measures the agreement between predicted and true change points, with values ranging from 0 to 1 and higher scores representing better alignement.
We also report Haussdorf distances and F1 scores in \zcref{sec:additional-metrics}.

For the pace change experiment in \zcref{tab:sbm-speed} we see that the prototype dissimilarity models generally outperform other baselines, even though they lag behind our \textit{graph-RSS $L_{2,2}$} with additive edge noise on the BW geodesic, and are on par with it on the linear one.
This result is not surprising, as the prototype model aligns well with the constant speed hypothesis. In fact, through a piecewise-linear regression of the distances from points on the geodesic, i.e. the prototypes, one monitors the speed on the same.
The three graph embedding models generally perform worse than the distance-based ones, with the notable exception of the low performances of both \emph{graph-RSS} methods on samples from independent edge tresholding.
We suppose that the sampling variance of those sequences is higher than the explained variance from graph interpolation, thus hindering the RSS hypothesis.

Analysing the results on full trajectory changes from \zcref{tab:sbm-synth}, we see that our proposed methods outperform the baselines across most settings.
In particular, the residual sum of squares on the BW distance between embeddings (\textit{graph-RSS BW}) achieves the best performance in ten out of fourteen scenarios, and the second best in all others.
The \textit{graph-RSS $L_{2,2}$} has strong performances on additive noise settings, but struggles with uniform edge thresholding.
The \emph{Prototype dissimilarity} methods are strong contenders, with the $L_{1,1}$ declination showing competitive results.
This highlights that distance-based methods are more robust and effective for change point detection in dynamic graphs, and our RSS approach is particularly interesting for partial evolution, as \zcref{tab:synthetic-sbm-partial-rand} shows a significant improvement over the second performing models.


\subsection{England Covid-19 Mobility}

\noindent We propose a novel study on the \enquote{England Covid Dataset} from \citet{panagopoulosTransferGraphNeural2021}, and distributed with PyTorch Geometric Temporal\footnote{\href{https://pytorch-geometric-temporal.readthedocs.io/en/0.56.2/index.html}{Version 0.56.2}}.
This experiment focuses on a qualitative comparison of change points detected by different algorithms.
The dataset provides daily mobility graphs of people moving between~129 counties in England---according to the Nomenclature of Territorial Units for Statistics (NUTS3)---from March 3rd to May 12th, based on Facebook Data For Good disease prevention maps.
Each graph has weighted edges counting the daily number of people travelling from one region to another one.
We compare LAD, iso-mirror, prototype dissimilarity and total variation costs with both linear and BW distances.
For all methods, we look for the best segmentation into five regimes, separated by four change points, of the sixty days of measurements.
We perform segmentation with an exact Dynamic Programming search algorithm.

\zcref{tab:covid-events} shows the dates identified by different methods, along with major events in the Covid-19 pandemic\footnotemark.
The most relevant event affecting mobility is the start of the lockdown, with the first closures on March 20th and full enforcement on March 26th.
Both dates are recovered by the LAD algorithm, and regression on iso-mirror embeddings identifies the latter date.
Models based on BW distance indentify a single regime change within those dates, two days after bars and restaurant closes, which suggest that they might group the restriction effects within a same transition dynamic.
The BW metric seems more suited for detecting structural changes in mobility patterns than $L_{1,1}$, which has no regime separation on that period.
Both graph-RSS models and LAD highlight a drift arount March 18, which aligns with a national statement from the Prime Minister which advised against \enquote{non-essential} contact and travel.
Four models identify a change around March 11, which corresponds to the Chetenham festival which some sources identify as a major spreading event, and which we might consider as a mobility anomaly, with 150k people attending from all-over the Country.
Finally, all models agree on a regime change in the first days of April, when hospital admissions peaked.

\footnotetext{
  \enquote{Timeline of the COVID-19 pandemic in the United Kingdom (January--June 2020)}, \href{https://w.wiki/FJWj}{Wikipedia}; and
  \enquote{Healthcare in England: patients in hospital}, \href{https://web.archive.org/web/20210730154439/https://coronavirus.data.gov.uk/details/healthcare?areaType=nation&areaName=England}{archived GOV.UK}.
}

\section{Discussion and Open Directions}\label{sec:discussion}

\noindent  Regarding the choice of geodesic and distance metric for CPD as described in \zcref{ssec:graph-geodesics}, one should consider the fundamental trade-off between model fidelity and computational cost.
Search algorithms for CPD compute the cost for any segmentation candidate, and their complexity with respect to the number of samples~\(T\) ranges between \(\mathcal O (T \log T)\) for binary segmentation and \(\mathcal O (T^2)\) for PELT.
In this work we focused on the implementations from \zcref{ssec:practical-model}, based on the Bures-Wasserstein and Euclidean distances.
Still, the computational complexity of BW distance and interpolation makes them impractical for large graphs with many search algorithms, with a cumulative complexity of \(\mathcal O (N^3 T^2 \log T)\) for binary segmentation.
In comparison, CPD based on linear interpolation and \(L\) distances would run in \(\mathcal O (E T^2 \log T)\), which scales better with graph size and allows for sparse computing, at the expense of overseeing structural properties.

Testing our geodesic framework on graph sequences with varying node sets is an interesting direction for future research.
This more general problem requires distances that do not assume node correspondence, such as the Gromov-Wasserstein (GW) distance \citep{titouanOptimalTransportStructured2019} or Graph Edit Distance \citep{bunkeInexactGraphMatching1983}. These metrics compare graphs of potentially different sizes by finding an optimal alignment or a minimal set of edits. While conceptually powerful, their practical application presents significant hurdles. Both GW and GED are computationally expensive (with GED being NP-hard in the general case), which makes them difficult to deploy in search algorithms, as they require numerous cost evaluations. While these distances offer a path toward a more general theory, developing computationally tractable approximations and robust geodesic regression methods for them is outside the scope our work.

Another important extension of our framework lies in online regime change detection.
The current approach builds on the standard offline detection pipeline, which consists of applying a search algorithm to identify the segmentation with the lowest cost and the least number of changes, among all possible ones on available data.
This method can be applied to online data, either by re-running them at certain intervals on all collected data, or on the expanding window starting from the last change point and growing on incoming observations.
However, it would be an exciting opportunity to leverage our definition of consistent evolution through geodesics to forecast trajectories, and directly estimate the deviation of incoming graphs from the expected evolution.

\section{Conclusion}

\noindent In this work, we moved beyond the traditional assumption of abrupt changes between stationary states to analyze continuously evolving dynamic graphs. Our primary contribution is the introduction of a formal framework that characterizes graph evolution in terms of trajectories in a graph metric space. We define coherent \emph{regimes} as periods of stable evolution along a single geodesic path and \emph{regime changes} as deviations from this trajectory. This geometric perspective provides a principled way to understand graph dynamics, accounting for not only abrupt shifts but also gradual drifts in the system's evolution.

Based on this formalism, we developed a graph \emph{residual sum of squares cost}, a practical method for quantifying how well a sequence of graphs adheres to an underlying geodesic. This allows us to use effective algorithms to perform change point detection by solving a geodesic regression problem. To bridge the gap between our continuous model and the discrete nature of real-world observations, we also proposed several sampling strategies to generate realistic graph sequences from these geodesics.

Our experimental evaluation validated this framework on both synthetic and real-world data. On synthetic sequences generated from SBMs, our total deviation cost achieved strong performance in identifying change points. More importantly, on a real-world dataset of mobility in England during the COVID-19 pandemic, our method successfully identified change points that aligned with major real-world events. We notably observed that models using the Bures-Wasserstein distance were particularly effective at detecting the structural changes corresponding to the national lockdown.

Future work could explore other graph distances, develop more sophisticated geodesic regression techniques---which might incorporate node and edge features, and exogenous variables---and apply this framework to a wider range of real-world problems.



\printbibliography


\appendix

\subsection{Additional Results for CPD on Synthetic Data}
\label{sec:additional-metrics}

\noindent \zcref{tab:speed-sbm-additional, tab:synthetic-sbm-additional, tab:synthetic-sbm-partial-additional} compare change detection metrics for multiple algorithms on synthetic SBM trajectories, with \textit{our methods} in italic. \textbf{Best scores} are bold, and \underline{second best} underlined. Results are median values over 9 different experimental settings.

The Haussdorf distance is the highest difference in time from a detected changepoint to the closest ground truth one.
The F1 score is computed by defining the CPD problem as a classification one.
More precisely, we define true positives as predicted change points within a small distance from ground truth changes, and false positives and negatives as predicted, or ground truth, changes without corresponding close labels.

\begin{table*}[ht]
  \centering
  \caption{\label{tab:speed-sbm-additional}
   Additional metrics on CPD with changing speed.
  }
  \begin{subtable}{\linewidth}
    \centering
    \caption{\label{tab:speed-sbm-hauss}
      Haussdorf distance.
    }
    \sisetup{detect-weight, mode=text}
\begin{tabular}{l|SSSSSSSS}
\toprule
Geodesic          & {bw}       & {bw}       & {bw}        & {bw}      & {linear}   & {linear}   & {linear} \\
Sampling strategy & {additive} & {thr edge} & {thr fixed} & {thr iid} & {additive} & {thr edge} & {thr iid} \\
\midrule
Graph invariants             & 11.83 & 12.68 & \underline{11.03} & 14.19 & 10.65 & 12.26 & \underline{12.53} \\
LAD                          & 16.68 & 19.60 & 16.85 & 16.69 & 20.66 & 17.19 & 15.45 \\
Iso-mirror                   & 18.45 & 17.95 & 12.96 & 17.25 & 17.73 & 17.60 & 18.31 \\
Prototype dis.~$L_{1,1}$     & 9.45 & \bfseries 9.26 & 12.28 & \bfseries 13.09 & \bfseries 9.53 & \bfseries 9.54 & \bfseries 12.37 \\
\itshape Prototype dis.~BW   & \underline{9.37} & 11.39 & 13.09 & \underline{14.09} & \underline{10.13} & 11.50 & 13.58 \\
\itshape graph-RSS $L_{2,2}$ & \bfseries 6.82 & 17.83 & \bfseries 10.52 & 29.59 & 13.30 & 17.10 & 33.04 \\
\itshape graph-RSS BW        & 17.31 & \underline{10.82} & 12.78 & 31.13 & 10.73 & \underline{10.81} & 30.53 \\
\bottomrule
\end{tabular}

  \end{subtable}\newline

  \begin{subtable}{\linewidth}
    \centering
    \caption{\label{tab:speed-sbm-f1}
      F1 score.
    }
    \sisetup{detect-weight, mode=text}
\begin{tabular}{l|SSSSSSSS}
\toprule
Geodesic          & {bw}       & {bw}       & {bw}        & {bw}      & {linear}   & {linear}   & {linear} \\
Sampling strategy & {additive} & {thr edge} & {thr fixed} & {thr iid} & {additive} & {thr edge} & {thr iid} \\
\midrule
Graph invariants           & 68.84 & 69.05 & 59.12 & \underline{64.72} & 72.22 & 68.69 & \bfseries 66.43 \\
LAD                        & 57.31 & 53.89 & 51.41 & 48.04 & 53.48 & 57.51 & 46.93 \\
Iso-mirror                  & 53.92 & 53.42 & \bfseries 61.35 & 52.75 & 55.24 & 54.94 & 54.37 \\
Prototype dis.~$L_{1,1}$   & \underline{75.40} & \underline{74.49} & 54.88 & 61.92 & \bfseries 74.94 & \underline{73.64} & 62.07 \\
\itshape Prototype dis.~BW & 74.82 & 70.62 & 53.82 & \bfseries 64.93 & \underline{74.56} & 67.64 & \underline{65.89} \\
\itshape graph-RSS $L_{2,2}$ & \bfseries 78.83 & 42.56 & 54.39 & 34.13 & 70.00 & 43.55 & 31.58 \\
\itshape graph-RSS BW        & 65.00 & \bfseries 77.54 & \underline{59.81} & 38.46 & 73.53 & \bfseries 75.95 & 38.29 \\
\bottomrule
\end{tabular}

  \end{subtable}
\end{table*}

\begin{table*}[ht]
  \centering
  \caption{\label{tab:synthetic-sbm-additional}
    Additional metrics on CPD with endpoint deviation and full evolution.
  }
  \begin{subtable}\linewidth
    \centering
    \caption{\label{tab:synthetic-sbm-hauss}
      Haussdorf distance.
    }
    \sisetup{detect-weight, mode=text}
\begin{tabular}{l|SSSSSSSS}
\toprule
Geodesic          & {bw}       & {bw}       & {bw}        & {bw}      & {linear}   & {linear}   & {linear} \\
Sampling strategy & {additive} & {thr edge} & {thr fixed} & {thr iid} & {additive} & {thr edge} & {thr iid} \\
\midrule
Graph invariants           & 20.46 & 17.74 & 16.72 & 15.69 & 19.39 & 17.57 & 12.72 \\
LAD                        & 17.92 & 25.51 & 21.35 & 23.75 & 19.85 & 25.97 & 24.53 \\
Iso-mirror                 & 24.11 & 27.36 & 24.04 & 21.70 & 22.95 & 21.74 & 24.73 \\
Prototype dis.~$L_{1,1}$   & 14.32 & \bfseries 13.14 &  \bfseries 9.91 & \underline{10.29} & 14.67 & \bfseries 12.99 &  \underline{9.70} \\
\itshape Prototype dis.~BW & \underline{13.79} & 14.59 & 10.85 & \bfseries 10.27 & 13.10 & \underline{14.21} & 10.76 \\
\itshape graph-RSS $L_{2,2}$    & 16.23 & 22.90 & 20.62 & 11.95 & \bfseries 10.00 & 22.90 & 11.28 \\
\itshape graph-RSS BW           & \bfseries 10.52 & \underline{15.33} & \underline{10.25} & 10.51 & \underline{11.05} & 14.62 &  \bfseries 9.40 \\
\bottomrule
\end{tabular}

  \end{subtable}\newline

  \begin{subtable}{\linewidth}
    \centering
    \caption{\label{tab:synthetic-sbm-f1}
      F1 score.
    }
    \sisetup{detect-weight, mode=text}
\begin{tabular}{l|SSSSSSSS}
\toprule
Geodesic          & {bw}       & {bw}       & {bw}        & {bw}      & {linear}   & {linear}   & {linear} \\
Sampling strategy & {additive} & {thr edge} & {thr fixed} & {thr iid} & {additive} & {thr edge} & {thr iid} \\
\midrule
Graph invariants           & 52.40 & 60.35 & 62.10 & 69.28 & 54.66 & 59.70 & 79.35 \\
LAD                        & 59.63 & 44.64 & 44.37 & 38.71 & 57.68 & 47.76 & 37.50 \\
Iso-mirror                 & 46.20 & 42.08 & 52.86 & 47.59 & 47.43 & 44.10 & 46.87 \\
Prototype dis.~$L_{1,1}$   & 69.65 & \bfseries 75.14 & \bfseries 88.41 & \bfseries 85.87 & 67.77 & \bfseries 74.45 & \underline{85.72} \\
\itshape Prototype dis.~BW & \underline{75.15} & 68.48 & \underline{86.27} & \underline{85.31} & \underline{78.73} & 69.44 & \bfseries 86.02 \\
\itshape graph-RSS $L_{2,2}$    & 65.24 & 34.07 & 41.33 & 59.52 & 76.78 & 34.07 & 62.52 \\
\itshape graph-RSS BW           & \bfseries 79.80 & \underline{73.03} & 84.19 & 82.03 & \bfseries 79.58 & \underline{73.94} & 81.55 \\
\bottomrule
\end{tabular}

  \end{subtable}
\end{table*}

\begin{table*}[ht]
  \centering
  \caption{\label{tab:synthetic-sbm-partial-additional}
    Additional metrics on CPD with endpoint deviation and partial evolution.
  }
  \begin{subtable}\linewidth
    \centering
    \caption{\label{tab:synthetic-sbm-partial-hauss}
      Haussdorf distance.
    }
    \sisetup{detect-weight, mode=text}
\begin{tabular}{l|SSSSSSSS}
\toprule
Geodesic          & {bw}       & {bw}       & {bw}        & {bw}      & {linear}   & {linear}   & {linear} \\
Sampling strategy & {additive} & {thr edge} & {thr fixed} & {thr iid} & {additive} & {thr edge} & {thr iid} \\
\midrule
Graph invariants           & 19.05 & 17.85 & 18.08 & 16.74 & 16.72 & 17.11 & 16.92 \\
LAD                        & 20.02 & 22.76 & 25.70 & 25.49 & 19.93 & 23.48 & 23.15 \\
Iso-mirror                 & 23.45 & 26.46 & 22.47 & 29.23 & 21.56 & 22.51 & 22.12 \\
Prototype dis.~$L_{1,1}$   & 14.84 & \bfseries 13.34 & \bfseries 16.89 & \underline{13.52} & 14.83 & \bfseries 13.58 & \underline{13.17} \\
\itshape Prototype dis.~BW & 14.80 & 15.43 & 18.01 & 14.56 & 14.89 & 14.75 & 14.65 \\
\itshape graph-RSS $L_{2,2}$    & \underline{11.06} & 22.70 & 21.17 & 21.26 &  \bfseries 9.46 & 22.70 & 20.70 \\
\itshape graph-RSS BW           &  \bfseries 9.79 & \underline{13.58} & \underline{17.14} &  \bfseries 9.25 & \underline{10.50} & \underline{13.89} &  \bfseries 8.53 \\
\bottomrule
\end{tabular}

  \end{subtable}\newline

  \begin{subtable}{\linewidth}
    \centering
    \caption{\label{tab:synthetic-sbm-partial-f1}
      F1 score.
    }
    \sisetup{detect-weight, mode=text}
\begin{tabular}{l|SSSSSSSS}
\toprule
Geodesic          & {bw}       & {bw}       & {bw}        & {bw}      & {linear}   & {linear}   & {linear} \\
Sampling strategy & {additive} & {thr edge} & {thr fixed} & {thr iid} & {additive} & {thr edge} & {thr iid} \\
\midrule
Graph invariants           & 55.53 & 57.28 & 58.15 & 65.74 & 60.87 & 60.11 & 67.31 \\
LAD                        & 49.07 & 51.78 & 47.09 & 41.47 & 51.17 & 50.42 & 39.60 \\
Iso-mirror                 & 45.26 & 44.40 & 49.13 & 40.83 & 49.30 & 46.58 & 49.04 \\
Prototype dis.~$L_{1,1}$   & 69.57 & \bfseries 74.41 & 62.15 & \underline{76.09} & 69.80 & \bfseries 74.04 & \underline{79.05} \\
\itshape Prototype dis.~BW & 70.65 & 67.73 & \underline{62.66} & 75.13 & 70.87 & 70.00 & 75.82 \\
\itshape graph-RSS $L_{2,2}$    & \underline{74.68} & 35.46 & 42.60 & 36.19 & \underline{77.02} & 35.73 & 37.13 \\
\itshape graph-RSS BW           & \bfseries 81.33 & \underline{74.05} & \bfseries 69.83 & \bfseries 90.68 & \bfseries 80.85 & \bfseries 74.04 & \bfseries 95.03 \\
\bottomrule
\end{tabular}

  \end{subtable}
\end{table*}

\end{document}